\pgfplotsset{compat=1.18,colormap/hot}
\newsavebox\mybox
\tikzset{
    font=\sf \scriptsize,
    >=LaTeX,
    % Styles
    cell/.style={% For the main box
            rectangle,
            rounded corners=2.5mm,
            inner sep=5mm,
            draw,
            thick,
        },
    operator/.style={%For operators like +  and  x
            circle,
            draw,
            inner sep=0pt,
            minimum height =.5cm,
        },
    sum/.style= {operator,
            append after command={
                    (\tikzlastnode.north) edge (\tikzlastnode.south)
                    (\tikzlastnode.west) edge (\tikzlastnode.east)
                }
        },
    function/.style={%For functions
            ellipse,
            draw,
            inner sep=1pt
        },
    ct/.style={% For external inputs and outputs
            circle,
            draw,
            line width = .75pt,
            minimum width=0.75cm,
            inner sep=1pt,
        },
    gt/.style={% For internal inputs
            rectangle,
            draw,
            minimum width=4mm,
            minimum height=3mm,
            inner sep=1pt
        },
    label/.style={
            font=\scriptsize\sffamily
        },
    ml/encoder/.style={trapezium, draw, trapezium angle=75, shape border rotate=270},
    ml/decoder/.style={trapezium, draw, trapezium angle=75, shape border rotate=90},
    ml/node/.style={draw, circle, minimum width=10pt},
    ml/weights/.style={thin, opacity=0.8},
    ml/weights/arrow/.style={-stealth, ml/weights},
    ml/weights/rounded1/.style={% Arrows with rounded corners
            ml/weights,
            rounded corners=.25cm
        },
    ml/weights/rounded2/.style={% Arrows with big rounded corners
            ml/weights,
            rounded corners=.5cm
        },
    declare function={
            normalizing_constant(\sigmax,\sigmay,\rho) = 1 / (2 * pi * \sigmax * \sigmay * sqrt(1 - \rho^2));
            exponential_term(\x,\y,\mux,\muy,\sigmax,\sigmay,\rho) = -0.5 * (
            ((\x - \mux) / \sigmax)^2
            - 2 * \rho * ((\x -\mux) / \sigmax) * ((\y - \muy) / \sigmay)
            + ((\y - \muy) / \sigmay)^2
            ) / (1 - \rho^2);
            bivariate_gaussian(\x,\y,\mux,\muy,\sigmax,\sigmay,\rho) = normalizing_constant(\sigmax, \sigmay, \rho) * exp( exponential_term(\x, \y, \mux, \muy, \sigmax, \sigmay, \rho) );
        },
    every text node part/.style={align=center},
    node distance=1cm and 3mm,
    every on chain/.style=join,
    every join/.style={ml/weights/arrow},
}
\newcolumntype{R}[1]{>{\raggedleft\let\newline\\\arraybackslash\hspace{0pt}}m{#1}}
\newcolumntype{L}[1]{>{\arraybackslash}m{#1}}
\renewcommand{\vec}[1]{\mathbf{#1}}
\newcommand{\lhcgwconv}{LHC\_GW\_CONV\xspace}
\newtheorem{theorem}{Theorem}[section]
\newtheorem{lemma}[theorem]{Lemma}
\renewcommand{\todo}[2][]{\tikzexternaldisable\@todo[#1]{#2}\tikzexternalenable}
\renewcommand{\missingfigure}[2][]{MISSING FIGURE}%\tikzexternaldisable\missingfigure[#1]{#2}\tikzexternalenable}
\renewcommand\nomgroup[1]{%
\item[\bfseries
        \ifstrequal{#1}{A}{Acronyms}{%
        \ifstrequal{#1}{S}{Symbols}{}}%
]}
\DeclareSIUnit\km{\kilo\metre}
\DeclareSIUnit\kmsq{\km\squared}
\newcommand{\nomdef}[2]{%
    \ifcsname nom@\detokenize{#2}\endcsname
    #2
    % \textcolor{red}{#2}%
    \else
    \expandafter\gdef\csname nom@\detokenize{#2}\endcsname{}%
    #1 (#2)\nomenclature{#2}{#1}%
    \fi
}
\newcommand\tech[1]{\textit{#1}}
\renewcommand{\min}{\text{min}}
\renewcommand{\max}{\text{max}}
\newcommand{\ngauss}{{N_\textit{gaussian}}}
\newcommand{\rbuffer}{{R_\textit{buffer}}}
\newcommand{\fs}[2]{FS\_#1\_#2\xspace}
\newcommand{\fsppofcn}{\fs{PPO}{FCN}}
\newcommand{\fsppoconvtwod}{\fs{PPO}{CONV2D}}
\newcommand{\fsppolstm}{\fs{PPO}{LSTM}}
\newcommand{\fssacfcn}{\fs{SAC}{FCN}}
\newcommand{\fssacconvtwod}{\fs{SAC}{CONV2D}}
\newcommand{\fssaclstm}{\fs{SAC}{LSTM}}
\newcommand{\lstmaesac}{LSTMAE\_SAC\xspace}
\newcommand{\lstmaeppo}{LSTMAE\_PPO\xspace}
\newcommand{\lstmppo}{LSTM\_PPO\xspace}
\begin{document}

% --- Title declaration --- %
\title{\LARGE \bf Recurrent Auto-Encoders for Enhanced Deep Reinforcement Learning in Wilderness Search and Rescue Planning}
\tnotetext[tfunding]{This work was supported by the Engineering and Physical
  Sciences Research Council, Grant/Award Number: EP/T517896/1-312561-05}
\author[1]{Jan-Hendrik Ewers \corref{cor1}}
\ead{j.ewers.1@research.gla.ac.uk}
\author[1]{David Anderson}
\ead{david.anderson@glasgow.ac.uk}
\author[1]{Douglas Thomson}
\ead{douglas.thomson@glasgow.ac.uk}
\affiliation[1]{organization={Autonomous Systems And Connectivity Research Division, University of
      Glasgow},
  country={Scotland}
}
\cortext[cor1]{Corresponding author}

\begin{abstract}
    Wilderness search and rescue operations are often carried out over vast landscapes. The search efforts, however, must be undertaken in minimum time to maximize the chance of survival of the victim. Whilst the advent of cheap multicopters in recent years has changed the way search operations are handled, it has not solved the challenges of the massive areas at hand. The problem therefore is not one of complete coverage, but one of maximizing the information gathered in the limited time available.
    In this work we propose that a combination of a recurrent autoencoder and deep reinforcement learning is a more efficient solution to the search problem than previous pure deep reinforcement learning or optimisation approaches. The autoencoder training paradigm efficiently maximizes the information throughput of the encoder into its latent space representation which deep reinforcement learning is primed to leverage. Without the overhead of independently solving the problem that the  recurrent autoencoder is designed for, it is more efficient in learning the control task. We further implement three additional architectures for a comprehensive comparison of the main proposed architecture. Similarly, we apply both soft actor-critic and proximal policy optimisation to provide an insight into the performance of both in a highly non-linear and complex application with a large observation space.
    Results show that the proposed architecture is vastly superior  to the benchmarks, with soft actor-critic achieving the best performance. This model further outperformed work from the literature whilst having below a fifth of the total learnable parameters and training in a quarter of the time.
\end{abstract}

\maketitle

\section{Introduction}
\label{sect:intro}

\nomdef{Wilderness search and rescue}{WiSAR} missions are some of the most time-sensitive operations in existence. Shaving off seconds in the time to find and the resultant rescue can directly result in saved lives. Over small areas it can be effective to quickly cover the entire search space using modern technology such as drones \citep{carrell_flying_2022}, however this becomes intractable over larger areas. The search area can quickly balloon into the tens of kilometres in width and depth when considering a WiSAR scenario. This introduces the requirement to take the endurance of the searcher into account as complete coverage is no longer feasible. This is referred to as search planning which aims to maximize a objective given a maximum path length.

Current approaches to using \nomdef{unmanned aerial systems}{UAS} during deployment by organisations like Police Scotland Air Support Unit is the pilot-observer model. This mandates that there are always at least two personnel present to operate the UAS no matter the scenario. The pilot flies the drone whilst also observing and inspecting the live camera feed, whereas the observers is in charge of maintaining a visual line-of-sight to the UAS at all times for safety and legislative reasons. In work by \citet{koester_sweep_2004}, it was identified that a searcher has a higher probability of detection when not in motion leading to the stop-and-look method. Whilst this work was carried out for foot-based searchers, the same strategy can be observed in \citet{ewers_optimal_2023} for UAS pilots. It is therefore evident that the cognitive load of manoeuvring and searching is a key limitation, and that being able to offload the menial flying of the drone frees up the pilot to spend more effort on the search.

Previous work \citep{ewers_optimal_2023, ewers_deep_2025}, has shown the strength of \nomdef{Deep Reinforcement Learning}{DRL} over analytical and optimal search planning methods.
Work by \citet{talha_autonomous_2022} and \citet{peake_deep_2021} applied DRL to a similar problem, however these algorithms explore the
environment during the search and do not have the complete \nomdef{Probability Distribution Map}{PDM} at the beginning. This is a reasonable scenario to be in at the start of the search as local area knowledge, maps, and case studies are all available to a pilot before the search begins. Similarly, PDM generation algorithms such as from \citet{hashimoto_agentbased_2022} are viable solutions to generate the PDM as an input.

The proposed algorithm builds on the previously efforts seen in \citep{ewers_deep_2025} which had severe limitations due to the policy network architecture.
The most major of these limitations was that through the use frame-stacking the maximum number of waypoints in a search path was constrained.
Another limitation was the long training time required to train the policy due to its large amount of learnable parameters.
This of course also results in poor runtime performance during deployment as multiple gigabytes of memory are consumed to hold the model on potentially resource limited devices.

\citet{mock_comparison_2023} found that using frame-stacking over recurrent architectures leads to comparable performance.
However, in this formulation the observations from early in the simulation become decreasingly unimportant. In the search planning algorithm from \citet{ewers_deep_2025} this is not the case and every step has an equal impact on the next steps reward. Hence, the policy must be able to observe the observations back to $t=0\si\second$ to maintain the \nomdef{Markv Decision Process}{MDP}.

In this work we aim to tackle two of the aforementioned problems: find an alternative to frame-stacking, and increase general performance (training times, model size, overall efficacy).
In \citet{raffin_decoupling_2019} a \nomdef{Auto-Encoder}{AE} is successfully used within the DRL loop whilst in \citep{park_sequencetosequence_2018} the sequence-to-sequence architecture is introduced for text-based applications. This is the basis to approach the first problem, and also a possible approach to handle the second. We hypothesis that by splitting out the feature extraction phase as a observation preprocessing step we can harness the powerful AE training setup, and thusly reduce the training overhead by not having to learn the feature extraction during the DRL phase. This will then result in two models (AE and DRL policy) working in unison that are specialized and substantially smaller than in \citet{ewers_deep_2025} with better performance.

This work therefore contributes the following advances to the field:
\begin{itemize}
    \item A framework to decrease training times and to significantly reduce the number of learnable parameter, whilst enhancing final performance through the proposed AE and DRL architecture,
    \item Empirical evaluation of frame-stacking and recurrent frameworks for large observations,
    \item Further the discussion in comparing PPO and SAC in a large observation space domain.
\end{itemize}

Related work is discussed in \autoref{sect:related_work}. The environmental modelling is presented in \autoref{sect:method}, whilst the RAE and DRL architectures are introduced in \autoref{sect:rae} and \autoref{sect:architecture} respectively. Implementation details are outlined in \autoref{sect:implementation}, followed by results and discussions in \autoref{sect:results} and \autoref{sect:discussion}. Finally, a conclusion is drawn in \autoref{sect:conclusion}.

% \printnomenclature

% \tableofcontents

\section{Related Work}
\label{sect:related_work}

\nomdef{Deep Reinforcement Learning}{DRL} has seen significant application across exploration planning domains.
\citet{zuluaga_deep_2018} focuses on urban scenarios and incorporates frontier exploration into the search task where the agent gathers information on the environment over time. \citet{niroui_deep_2019}  uses SLAM with DRL to explore a cluttered environment, which has many parallels to mining search, in real time. Similarly \citet{peake_deep_2021}  applies DRL to WiSAR and employs dual-policy DRL, DDPG and recurrent A2C, which were trained separately to handle the exploration and trajectory planning. \citet{talha_autonomous_2022} also uses two DRL policies to handle the navigation and exploration separately. These foundational works notably omit consideration of prior probability distributions for target locations. \citet{ewers_deep_2025} explored this scenario and showed that DRL outperforms search planning methods from the literature, however with long training times, and too many parameters to be practical.

One potential solution to reducing the training overhead is to use hierarchical DRL with multiple more specialized models and policies. As previously outlined, \citet{peake_deep_2021} and \citet{talha_autonomous_2022} apply the dual-policy paradigm but still rely on a single policy to handle the top-level exploration planning. This then implies that the search planning from \citet{ewers_deep_2025} can be coupled with the lower-level trajectory or navigation planning from the aforementioned work. However, the same issues arise in that the mission planning policy is no better than before.

The architecture in \citet{ewers_deep_2025} uses frame-stacking to ensure that the \nomdef{Markov Decision Process}{MDP} is maintained. This is due to the reward at the current time step being dependent on the position of the agent in all previous time steps. If the agent intersects with the historical path then it is penalized by not gaining any new information. Whilst \citet{mock_comparison_2023} found that frame-stacking and recurrent architectures performed similarly, frame-stacking imposes a hard upper limit on the size of the input. The buffer size can be increased to overcome this issue but this has problems of its own. If the buffer is not full during training then inputs associated with data points far in the future cannot be trained leading to inefficient - or even unstable - training. It is therefore imperative to find another approach to handle the temporal input.

In \citet{raffin_decoupling_2019}, the concept of decoupling the feature extraction for DRL is explored. It was found that the proposed method was far superior to the standard single-policy approach in DRL. Interestingly, it was found that this method was only slightly better than an \nomdef{Auto-Encoder}{AE}. AEs have two components: an encoder applying a transformation on the input into a latent space, followed by a decoder approximating the reverse of this process \citet{berahmand_autoencoders_2024}. However, the method from this work requires frame-stacking again to work effectively.
\citet{pleines_generalization_2022} used a \nomdef{Long-Short Term Memory}{LSTM} layer \citep{hochreiter_long_1997} within the policy but another approach is to couple this with the aforementioned AE to leverage the sequence-to-sequence architecture proposed in work by \citet{park_sequencetosequence_2018} to predict the trajectory of a vehicle through a \nomdef{Recurrent Auto-Encoder}{RAE}. The sequence-to-sequence architecture was also used in \citet{cho_learning_2014} to process complex phrase representations for translations; another domain where there are dependencies on the entire variable length dataset for context.

\nomdef{Proximal Policy Optimisation}{PPO} \citep{schulman_proximal_2017}, its recurrent variant \nomdef{Recurrent PPO}{RPPO} \citep{raffin_stablebaselines3_2021,pleines_generalization_2022}, and \nomdef{Soft Actor-Critic}{SAC} \citep{haarnoja_soft_2019} are widely used in control problems such as by \citet{kaufmann_championlevel_2023} and \citet{yue_deep_2022}, as well as in other domains such as video gaming \citep{openai_dota_2019}.
PPO's stability and simplicity make it ideal for initial policy convergence in deterministic settings, while SAC's entropy-driven exploration excels in dynamic environments requiring adaptive action distributions \citep{shianifar_optimizing_2025}. The contrast between PPO's bounded policy updates (via advantage function clipping) and SAC's stochasticity (via the maximum entropy formulation) provides a methodological spectrum to evaluate robustness. PPO has become one of the de facto DRL algorithms in the literature, however \citet{mock_comparison_2023} found that it was unable to cope with higher dimensional observation spaces as well as SAC could.

Our work addresses the training instability of frame-stacking through a RAE architecture that compresses temporal dependencies into latent states. By integrating sequence-to-vector trajectory encoding with decoupled feature extraction, we enable dynamic adaptation to environmental uncertainty while maintaining compatibility with the dimensionality constraints of the policy networks. This approach uniquely resolves the conflict between long-horizon probabilistic reasoning and fixed-size observation spaces.

\newcommand{\spdm}{
    \left[
        \vec\mu_0,
        \vec\sigma_0,
        \dots,
        \vec\mu_G,
        \vec\sigma_G
        \right]^T
}
\newcommand{\spath}{
    \left(
    \vec x ~\|~\mathrm{0}^{2\times N_\mathrm{waypoints}-t}
    \right)^T
}

\section{Environment Modelling}
\label{sect:method}

\subsection{Agent Dynamics}

The agent within the environment is modelled as a heading control model with a fixed step size $s \unit{\meter}$.
It is assumed that any physical vehicle, such as a drone in the case of WiSAR, executing this mission can accurately track the waypoints via its controller or operator.
The agent's state is represented by the 2D position vector $\vec x = [x,y]^T \in \mathbb{R}^2$ and it's dynamics are described by the nonlinear system
\begin{align}
    \dot{\vec x}
     & =
    s
    \begin{bmatrix}
        \cos (\pi (a_t+1)) \\
        \sin (\pi (a_t+1))
    \end{bmatrix} \\
    \vec y
     & =
    \begin{bmatrix}
        1 & 0 \\ 0& 1
    \end{bmatrix}
    \begin{bmatrix}
        x(t) \\ y(t)
    \end{bmatrix}
\end{align}
where $a_t \in [-1,1]$ is the agent's action at time-step $t$ and the Euler integration scheme, $\vec x_{t+1} = \vec x_t + \delta t \dot{\vec x}$,  is used with $\delta t = 1 \unit{\second}$.

\subsection{Probability Distribution Map}

% \begin{figure}[htbp]
%     \centering
%     \begin{tikzpicture}[]
%         \begin{axis}[
%                 xlabel=$x$,
%                 ylabel=$y$,
%                 zlabel=$z$,
%                 hide axis,
%             ]
%             \addplot3[
%                 mesh,
%                 samples=30,
%                 domain=-5:5,
%             ]
%             {
%                 (1/2) * bivariate_gaussian(x,y, 2.5,2.5, 1, 1, 0) + bivariate_gaussian(x,y, -2.5,-2.5, 2, 2, 0)
%             };
%         \end{axis}
%     \end{tikzpicture}
%     \caption{Example $p(\vec x)$ with $\ngauss = 2$, $\vec \mu_0 = [2.5,2.5]$, $\vec\sigma_0 = \mathrm{diag}([0,0])$, $\vec \mu_1 = [-2.5,-2.5]$, and $\vec\sigma_1 = \mathrm{diag}([2,2])$}
%     \label{fig:method:pdm}
% \end{figure}

It is assumed that the \nomdef{Probability Distribution Map}{PDM} is known and is modelled as a sum of $\ngauss$ bivariate Gaussians \citep{ewers_enhancing_2024}.
\citet{yao_gaussian_2017} and \citet{yao_gaussian_2022} showed that bivariate Gaussians are effective at approximating PDMs.
The agent at position $\vec x$ thus has a probability of being above the true search objective, a missing person for WiSAR, given by
\begin{equation}
    p(\vec x) =
    \frac{1}{\ngauss}
    \sum^\ngauss_{i=0}
    \frac
    {
        \exp{
            \left[
                -\frac
                {1}
                {2}
                (\vec x - \vec \mu_i )^T\vec\sigma_i^{-1}(\vec x - \vec \mu_i)
                \right]
        }
    }
    {\sqrt{4\pi^2\det{\vec\sigma_i}}}\label{eqn:sum_of_bivariate_gaussians}
    \\
\end{equation}
where $\vec \mu_i \in \mathbb{R}^2$
and
$\vec \sigma_i \in \mathbb{R}^{2 \times 2}$
are the mean location and covariance matrix of the $i$th bivariate Gaussian respectively. The mean is reset at the start of every episode with
\begin{equation}
    \vec \mu_i \sim \mathcal{U}([x_\min , x_\max],
    [y_\min, y_\max]
    ),\forall i \in [0,\ngauss]\label{eqn:method:sampling_mus}
\end{equation}

The covariance is left unchanged to avoid a further tunable simulation parameter. \autoref{fig:method:pdm} shows how the various PDMs are still highly irregular even with constant covariance due to the randomness introduced by \autoref{eqn:method:sampling_mus}.

\begin{figure}[htbp]
    \centering
    \newcommand{\threetwodgauss}[4]{
        \begin{tikzpicture}[
            ]
            \begin{axis}[
                    axis equal image,
                    scale only axis,
                    xlabel=$x$,
                    ylabel=$y$,
                    zlabel=$z$,
                    hide axis,
                    view={0}{90},
                    width=#4,
                    colormap={bw}{
                            gray(0cm)=(0);
                            gray(1cm)=(1);
                        },
                ]
                \addplot3[
                    surf,
                    shader=flat,
                    samples=30,
                    domain=0:15,
                ]
                {
                    (1/3)*(
                    bivariate_gaussian(x, y, #1, #1, 2.5, 2.5, 0) +
                    bivariate_gaussian(x, y, #2, #2, 2.5, 2.5, 0) +
                    bivariate_gaussian(x, y, #3, #3, 2.5, 2.5, 0)
                    )
                };
            \end{axis}
        \end{tikzpicture}
    }

    \subfloat[The three modes are far apart with the saddle being close to $0$.]{
        \threetwodgauss{0}{7.5}{15}{0.24\linewidth}
    }
    \hfill
    \subfloat[The saddle is almost the same value as the maxima of the modes.]{
        \threetwodgauss{2.5}{7.5}{12.5}{0.24\linewidth}
    }
    \hfill
    \subfloat[All three maxima have merged into a pill-shaped area of high value.]{
        \threetwodgauss{5}{7.5}{10}{0.24\linewidth}
    }
    \hfill
    \subfloat[All maxima are aligned leading to a single hotspot.]{
        \threetwodgauss{7.5}{7.5}{7.5}{0.24\linewidth}
    }
    \caption{Example PDM $p(\vec x)$ with $\ngauss = 3$ and constant covariance. Lighter areas are of higher probability whilst darker ones have lower probability. During search planning the agent would avoid lower probability regions whilst targeting high probability ones to maximize total seen probability.}
    \label{fig:method:pdm}
\end{figure}

\subsection{Reward Architecture}
\label{sect:method:rewards}

As the agent moves a constant distance $s\si\meter$ every step, it is assumed that the camera follows this path continuously at a fixed height whilst pointing straight down at all times.
Therefore, to represent the \tech{seen area} for a given path at time-step $t$, the path is buffered by $\rbuffer\si\meter$ to give the polygon $H_t$.
All probability from the PDM enclosed within $H_t$ is then \tech{seen} and denoted by $p_t$.
This value, the seen probability, is calculated through
\begin{equation}
    I(C) = \oint_C f(\vec x) dC
\end{equation}
Substituting $C=H_t$ and \autoref{eqn:sum_of_bivariate_gaussians} gives
\begin{equation}
    p_t = \oint_{H_t} p(\vec x) d H_t
    \label{eqn:accumulated_probability_at_t}
\end{equation}

Our goal is to maximize the captured probability mass.  To gain insight into how the agent's path affects this, we first focus on the local behaviour of $p_t$. We analyze the area covered by the agent in two steps, as this provides a foundation for understanding more complex paths.  The following lemma demonstrates a crucial property of this two-step area in the simplified case of a uniform PDM.
\begin{lemma}
    \label{lemma:method:pdm_optimal_theta_0}
    For a uniform PDM, $p(\vec x) = 1$, the area $A(\theta)$ of the region $H$ after two steps, as defined by
    \begin{equation}
        \begin{gathered}
            A_2(\theta) =
            \overbrace{4 s \rbuffer}^\textrm{Main Area}
            +\overbrace{\pi \rbuffer^2}^\textrm{Semi-Circle}
            +\overbrace{\frac{1}{2} \rbuffer^2 \theta}^{\textrm{Rounded corner}~g(\theta)}
            -\overbrace{
                \rbuffer^2 \tan
                \left(
                \frac{\theta}{2}
                \right)
            }^{\mathrm{Overlap}~f(\theta)} \\
            \forall~\theta \in \left[0, 2 \arctan\left(\frac{s}{\rbuffer}\right)\right]
            \label{eqn:method:area_of_two_steps}
        \end{gathered}
    \end{equation}
    where $\rbuffer$ and $s$ are constants, is maximized when $\theta = 0$.
\end{lemma}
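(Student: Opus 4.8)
The plan is to treat this as a pure one-variable optimisation of the given area function, since \autoref{eqn:method:area_of_two_steps} already encodes all the geometry. First I would note that the Main Area and Semi-Circle terms, $4s\rbuffer$ and $\pi\rbuffer^2$, are independent of $\theta$, so maximising $A_2$ over the stated interval is equivalent to maximising the corner contribution $g(\theta)-f(\theta) = \tfrac12\rbuffer^2\theta - \rbuffer^2\tan(\theta/2)$. The geometric intuition I would keep in mind is that bending the path adds an outer sector $g$ but subtracts an inner overlap $f$, and the claim amounts to saying the overlap penalty always outpaces the sector gain, so that straightening the path is optimal.

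Next I would differentiate. Using $\frac{d}{d\theta}\tan(\theta/2)=\tfrac12\sec^2(\theta/2)$, the constant terms vanish and I obtain
\begin{equation}
A_2'(\theta) = \frac{\rbuffer^2}{2}\left(1-\sec^2\!\left(\frac{\theta}{2}\right)\right) = -\frac{\rbuffer^2}{2}\tan^2\!\left(\frac{\theta}{2}\right),
\end{equation}
where the final equality is the identity $\sec^2-1=\tan^2$. This is the crux: the derivative is manifestly non-positive on the whole domain and strictly negative for $\theta>0$, so $A_2$ is monotonically decreasing and its maximum on the closed interval is attained uniquely at the left endpoint $\theta=0$. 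Equivalently, one can bypass calculus and invoke the elementary inequality $\tan u \ge u$ on $[0,\pi/2)$ with $u=\theta/2$, which gives $g(\theta)-f(\theta)=\rbuffer^2\!\left(\tfrac{\theta}{2}-\tan\tfrac{\theta}{2}\right)\le 0$ directly, with equality only at $\theta=0$.

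Finally I would confirm the argument is valid across the stated range. The endpoint $\theta = 2\arctan(s/\rbuffer)$ ensures $\theta/2 \le \arctan(s/\rbuffer) < \pi/2$, so $\tan(\theta/2)$ and $\sec^2(\theta/2)$ are finite and the overlap term is well-defined throughout. Honestly, the optimisation step is routine once the trigonometric cancellation is spotted; the only genuine subtlety lies upstream, in trusting that \autoref{eqn:method:area_of_two_steps} accounts correctly for the outer sector $g$ and the inner overlap $f$ over \emph{exactly} this interval, since beyond it the two buffered strips intersect differently and both the formula and the proof would need revisiting. I would therefore flag the domain restriction as the one place where care is needed, with the remaining steps following immediately.
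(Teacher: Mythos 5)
Your argument is correct and follows essentially the same route as the paper: differentiate $A_2$, observe that $\tfrac{1}{2}\rbuffer^2\left(1-\sec^2\left(\tfrac{\theta}{2}\right)\right)\leq 0$, conclude $A_2$ is monotonically decreasing on the interval, and hence maximized at $\theta=0$ (the paper merely dresses this up as a proof by contradiction). Your added observations --- the rewriting as $-\tfrac{\rbuffer^2}{2}\tan^2\left(\tfrac{\theta}{2}\right)$, the calculus-free alternative via $\tan u \geq u$, and the check that $\theta/2 < \pi/2$ keeps everything finite --- are correct refinements but do not change the substance.
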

\begin{proof}
    Assume, for the sake of contradiction, that there exists a $\theta^* \in \left[0, 2 \arctan\left(\frac{s}{\rbuffer}\right)\right]$ such that $A_2(\theta^*) > A_2(0)$.
    The derivative of $A_2$ with respect to $\theta$ is
    \begin{equation}
        \dv{A_2}{\theta}
        = \frac{1}{2} \rbuffer^2 \left(1-\sec^2\left(\frac{\theta}{2}\right)\right)
    \end{equation}
    Since $1- \sec^2(x) \leq 0~\forall~ x \in \mathbb{R}$, we have $\dv{A_2}{\theta} \leq 0~\forall~\theta \in \left[0, 2 \arctan\left(\frac{s}{\rbuffer}\right)\right]$.
    Because the derivative is non-positive, $A_2(\theta)$ is a monotonically decreasing function on the given interval.
    Since $A_2(\theta)$ is monotonically decreasing, for any $\theta^* > 0$, it must be the case that  $A_2(\theta^*) \leq A_2(0)$.
    Our assumption that there exists a $\theta^*$ such that $A_2(\theta^*)>A_2(0)$ must be false. Therefore, the maximum value of $A_2(\theta)$ is achieved when $\theta=0$.
\end{proof}
Further insights can be garnered by applying Green's theorem
\begin{equation}
    \oint_C (Ldx+Mdy)=\int \int_D (\pdv{M}{x} - \pdv{L}{y}) dA
\end{equation}
with $\pdv{M}{x} - \pdv{L}{y}=1$. This shows that decreasing the boundary $C=H_2$ reduces the area of the region $D$ bounded by $C$. With a uniform PDM, maximizing the geometric area is equivalent to maximizing the captured probability mass. From \autoref{lemma:method:pdm_optimal_theta_0}, the buffered polygon formulation maximizes the integral when $\theta=0$ for a uniform PDF. However, if $\pdv{M}{x} - \pdv{L}{y}$ is not constant it could be beneficial to increase $\theta$ and therefore reducing the area in order to maximize the encapsulated values.

Special consideration must be taken for the case where $\frac{s}{\rbuffer} < \frac{\pi}{2}$ as \autoref{lemma:method:pdm_optimal_theta_0} does not hold and must be further explored. This constraint, however, is always met in this work.

\begin{figure}[htbp]
    \newcommand{\bufferedlinestring}[3]{
        \begin{tikzpicture}[]
            \def\angle{#1};
            \def\radius{#2};
            \def\length{#3};

            \node[rectangle, draw, dotted, minimum height=2*\radius cm, minimum width=\length cm, anchor=west] (A)  {};
            \node[rectangle, draw, dotted, minimum height=2*\radius cm, minimum width=\length cm, rotate=\angle, anchor=west] (B) at (A.east) {};

            \node[rectangle, fit=(A) (B)] (surround) {};
            \begin{scope}
                \path[clip] (A.north west) -- (A.north east) -- (A.south east) -- (A.south west) --cycle;
                \path[clip] (B.north west) -- (B.north east) -- (B.south east) -- (B.south west) --cycle;
                \fill[red!50] (surround.south west) rectangle (surround.north east);
            \end{scope}
            \fill[green!50,] (A.east) -- (A.south east) arc[start angle=270, delta angle=\angle, radius=\radius] --cycle;
            \draw[dotted] (B.south west) -- +(\length/2,0);
            \draw[latex-latex] (B.south) arc[start angle=\angle, delta angle=-\angle, radius=\length/2] node[midway,left] {$\theta$};
            \coordinate (ABNorthIntersect) at (intersection of  A.north west--A.north east and B.north west--B.north east);

            \path[draw] (A.north west)
            -- (ABNorthIntersect)
            -- (B.north east) arc[start angle=90+\angle, delta angle=-180, radius=\radius]
            -- (B.south west) arc[start angle=270+\angle, delta angle=-\angle, radius=\radius]
            -- (A.south west) arc[start angle=270, delta angle=-180, radius=\radius]
            --cycle;
            \draw[latex-latex] (A.north west) -- node[right,right] {$\rbuffer$} (A.west);
            \draw[latex-latex] (A.west) -- node[midway,below] {$s$} (A.east);
        \end{tikzpicture}
    }
    \centering
    \subfloat[
        $\theta=\frac{\pi}{4}$
    ]{
        \bufferedlinestring{45}{0.75}{2.5}
    }
    \hfill
    \subfloat[
        $\theta = 2\arctan(\frac{\pi}{2})$
    ]{
        \bufferedlinestring{2*atan(pi/2)}{0.75}{2.5}
    }
    \hfill
    \subfloat[
        $\theta = 2\arctan(\frac{s}{\rbuffer})$
    ]{
        \bufferedlinestring{2*atan(2.5/0.75)}{0.75}{2.5}
    }
    \caption{Visualizing $H_2$ ($H_t$ after two steps) with different $\theta$. The areas coloured in red and green represent $f(\theta)$ and $g(\theta)$ from \autoref{eqn:method:area_of_two_steps} respectively.}
    \label{fig:method:buffered_linestring}
\end{figure}

The action is correlated to the reward by considering the change in accumulated probability at time $t$, defined as
\begin{equation}
    \Delta p_t = p_t - p_{t-1}\label{eqn:delta_p_t}
\end{equation}
To normalize this change in accumulated probability we introduce the scaling constants $k$ and $p_A$.
Constant $k$ relates the area of a single isolated step to the total search area, $A_\mathrm{area}\unit{\meter^2}$.
Simplifying \autoref{eqn:method:area_of_two_steps} to the single step case using the constants defined in \autoref{fig:method:buffered_linestring}, gives the ratio
\begin{equation}
    k = \frac{A_\textit{area}}{\rbuffer(\pi\rbuffer+2s)}
    \label{eqn:k}
\end{equation}
Constant $p_A$ is the total probability enclosed within the total search area given by substituting $H_t = A$ in \autoref{eqn:accumulated_probability_at_t}.
This is gives the scaled probability efficiency reward
\begin{equation}
    r = \frac{k}{p_A} \Delta p_t
    \label{qen:method:scaled_probability_effectiency_reward}
\end{equation}
The ratio of change in accumulated probability to total probability enclosed within the search area satisfies the constraint that $0 \leq \frac{\Delta p_t}{p_A} \leq 1$. This ratio is the probability efficiency, $e_{p,t}$, and provides a useful insight into the performance of a given path.

Further reward shaping is applied to discourage future out-of-bounds actions ($w_{oob}$), and to penalize visiting areas of very low probability ($w_0$).
The augmented reward $r'$ is given by
% Reward
\begin{equation}
    r' =
    \begin{cases}
        -w_{oob},   & \vec x_t \notin [x_\min,x_\max] \times [y_\min,y_\max] \\
        w_r r ,     & \Delta p_t > \epsilon                                  \\
        -w_{0}    , & \textit{else}
    \end{cases}
    \label{eqn:reward_with_cases}
\end{equation}

\subsection{Observation Processing}

The observation vector at time $t\unit{\second}$ is denoted by $s_t$. To ensure flexibility when designing the architectures, the available sub-states are given in \autoref{tbl:observation_states} with architecture-specific observation space definitions given in \autoref{sect:architecture}.

\begin{table}[htb]
    \centering
    \caption{Definition of the five state observations}
    \label{tbl:observation_states}
    \begin{tabular}{@{}lll@{}}
        \toprule
        Sub-state       & Symbol             & Definition                                            \\ \midrule
        Path            & $s_\mathrm{path}$  & $\spath$                                              \\
        PDM             & $s_\mathrm{PDM}$   & $\spdm$                                               \\
        Position        & $s_\mathrm{pos}$   & $\vec x_t$                                            \\
        Out-of-bounds   & $s_\mathrm{oob}$   & $\vec x_t \in [x_\min,x_\max] \times [y_\min,y_\max]$ \\
        Number of steps & $s_\mathrm{steps}$ & $t$                                                   \\ \bottomrule
    \end{tabular}
\end{table}

\section{Recurrent Autoencoder}
\label{sect:rae}

Recurrent encoders project a multidimensional input sequence to a fixed-length latent space $z_t$ through $E_\phi(x_t) \mapsto z_t$, parametrized by $\phi$ \citep{cho_learning_2014}. This work this uses the \nomdef{Long-Short Term Memory}{LSTM} architecture \citep{hochreiter_long_1997}. LSTM networks have a hidden state $h_t$ that is passed through the layers which holds the memory of previously seen states whereas $c_t$ is similar in that it carries the information about the sequence over time but is unique to each cell. Both $h_t$ and $c_t$ are critical to the handling of sequential data.
For each element in the input sequence, each layer computes the function
\begin{align}
    \label{eqn:lstm}
    \begin{split}
        i_t & = \sigma(W_{xi}x_t + W_{hi}h_{t-1} + b_{i}) \\
        f_t & = \sigma(W_{xf}x_t + W_{hf}h_{t-1} + b_{f}) \\
        g_t & = \tanh(W_{xg}x_t +  W_{hg}h_{t-1} + b_{g}) \\
        o_t & = \sigma(W_{xo}x_t + W_{ho}h_{t-1} + b_{o}) \\
        c_t & = f_t \odot c_{t-1} + i_t \odot g_t         \\
        h_t & = o_t \odot \tanh(c_t)
    \end{split}
\end{align}
where $i_t$, $f_t$, $g_t$, and $o_t$ are the input, forget, cell, and output gates respectively, $\sigma$ is the sigmoid function, $\odot$ is the hadamard product, and  $W$ and $b$ are the parameter matrices and vectors. $h_{t-1}$ is the hidden state of the previous layer at time $t-1$ and is initialized at time $t=0$ to be zero.
The \nomdef{Long Short-Term Memory}{LSTM} unit internal structure can be seen in \autoref{fig:rae:lstm_unit} showing the three gates interacting with the various states. $y_t$ is the output and is equal to $h_t$ of the final layer if multiple layers are used.

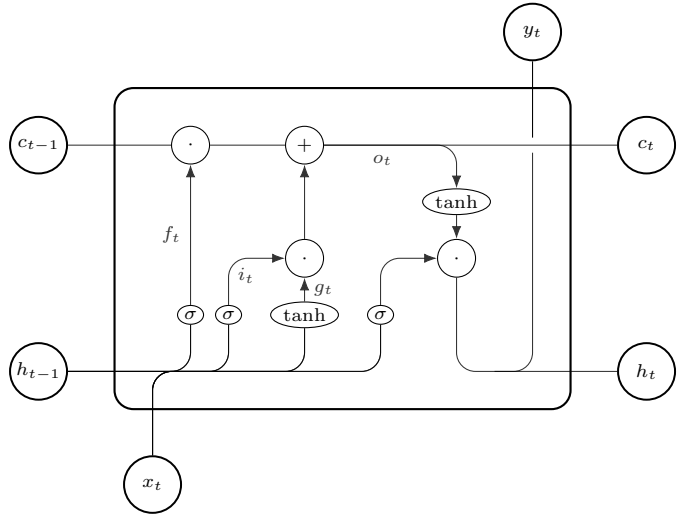
\begin{figure}[htbp]
    \centering
    \begin{tikzpicture}[]
    \node [function] (sig1) at (-2,-0.75) {$\sigma$};
    \node [function] (sig2) at (-1.5,-0.75) {$\sigma$};
    \node [function] (sig3) at (0.5,-0.75) {$\sigma$};
    \node [function] (tanh1) at (-0.5,-0.75) {$\tanh$};
    \node [function] (tanh2) at (1.5,0.75) {$\tanh$};

    \node [operator] (add1) at (-0.5,1.5) {+};
    \node [operator] (hadmard1) at (-2,1.5) {$\cdot$};
    \node [operator] (hadmard2) at (-0.5,0) {$\cdot$};
    \node [operator] (hadmard3) at (1.5,0) {$\cdot$};

    \node[ct, label={[label]Cell}] (ctprev) at (-4,1.5) {$c_{t-1}$};
    \node[ct, label={[label]Hidden}] (htprev) at (-4,-1.5) {$h_{t-1}$};
    \node[ct, label={[label]left:Input}] (xt) at (-2.5,-3) {$x_t$};
    \node[ct, label={[label]Cell}] (ct) at (4,1.5) {$c_t$};
    \node[ct, label={[label]Hidden}] (ht) at (4,-1.5) {$h_t$};
    \node[ct, label={[label]left:Output}] (yt) at (2.5,3) {$y_t$};

    \draw[ml/weights/rounded1] (ctprev) -- (hadmard1) -- (add1) -- node[pos=0.2, below] {$o_t$} (ct);

    \draw [ml/weights/rounded1] (xt) -- (xt |- htprev)-| (sig1);
    \draw [ml/weights/rounded1] (xt) -- (xt |- htprev)-| (sig2);
    \draw [ml/weights/rounded1] (xt) -- (xt |- htprev)-| (sig3);
    \draw [ml/weights/rounded1] (xt) -- (xt |- htprev)-| (tanh1);
    \draw [ml/weights/rounded1] (htprev) -| (sig1);
    \draw [ml/weights/rounded1] (htprev) -| (sig2);
    \draw [ml/weights/rounded1] (htprev) -| (sig3);
    \draw [ml/weights/rounded1] (htprev) -| (tanh1);

    \draw [->, ml/weights/rounded1] (tanh1) -- node[right] {$g_t$} (hadmard2);
    \draw [->, ml/weights/rounded1] (hadmard2) -- (add1);
    \draw [->, ml/weights/rounded1] (sig1)  -- node[left] {$f_t$} (hadmard1);
    \draw [->, ml/weights/rounded1] (sig2)  |-node[below right] {$i_t$} (hadmard2);
    \draw [->, ml/weights/rounded1] (sig3)  |- (hadmard3);
    \draw [->, ml/weights/rounded1] (add1) -| (tanh2) ;
    \draw [->, ml/weights/rounded1] (tanh2) -- (hadmard3);
    \draw [ml/weights/rounded1] (hadmard3) |- (ht);

    \draw (ct -| yt) ++ (0,-0.1) coordinate (i1);
    \draw (i1) ++ (0,0.2) -- (yt);
    \draw [-, ml/weights/rounded1] (ht -| yt)++(-0.5,0) -| (i1);

    \node[cell, fit=(xt |- htprev) (ct -| yt) (hadmard1)] {};
\end{tikzpicture}
    \caption{The Long Short-Term Memory unit internal structure from \autoref{eqn:lstm}}
    \label{fig:rae:lstm_unit}
\end{figure}

An approximation of the input is then made by the decoder $D_\theta$, parameterized by $\theta$, of the same length. The decoder is passed the latent space $z_t$ as many times as there are rows in $x_t$, as well as the hidden state $h_t$ and cell state $c_t$. Each output is then the estimated value of the corresponding item in the input sequence.
The loss is calculated using the mean square error
\begin{equation}
    \mathcal{L}(s, \hat s, z) = \frac{1}{dim(s)}\sum_{i=1}^N (\hat{s}_i - s_i)^2
\end{equation}

The optimal RAE for $s_\mathrm{path}$ encoding is then found by
\begin{gather}
    \phi^*, \theta^* = \arg \min_{\phi, \theta} \mathcal{L} \left[ s_\mathrm{path}, D_\theta(E_\phi(s_\mathrm{path})), E_\phi(s_\mathrm{path}) \right]
\end{gather}

The RAE network can be seen in \autoref{fig:rae:lstm_ae}. Using an unbalanced setup in favour of the decoder gives a higher reconstruction potential, which leads to better encoder training. The larger decoder compensates for any loss of information during encoding, ensuring that even a suboptimal latent representation can still result in high-quality training updates. This setup also stabilizes training and accelerates convergence by allowing the decoder's higher capacity to handle complex reconstruction tasks effectively.

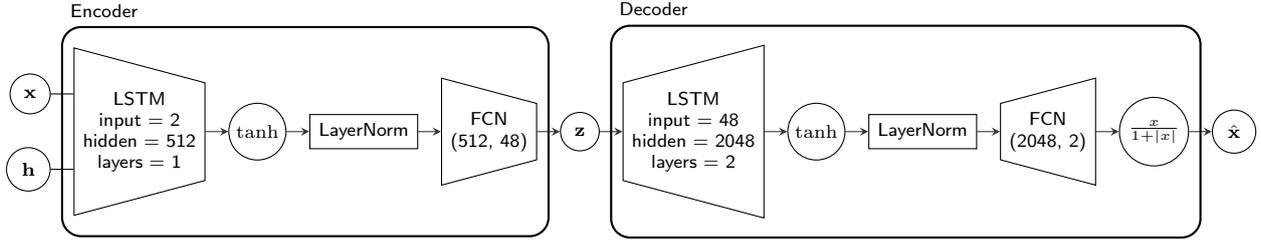
\begin{figure*}
    \centering
    \begin{tikzpicture}[
            input/.style={circle, draw},
            output/.style={circle, draw},
            fcn/.style={rectangle, draw},
            cnn/.style={rectangle, draw},
            rnn/.style={rectangle, draw},
        ]
        {[start chain]
            \node [rnn, ml/encoder, on chain] (lstm_e) {LSTM \\ input = 2 \\ hidden = 512 \\ layers = 1};
            \node [operator, on chain] (tanh_e) {~$\tanh$~};
            \node [fcn, on chain] (layer_norm_e) {LayerNorm};
            \node [fcn, ml/encoder, on chain] (fcn_e) {FCN \\ (512, 48)};
            \node [input, on chain] (z) {$\vec z$};
            \node [input,ml/decoder, on chain] (lstm_d) {LSTM \\ input = 48 \\ hidden = 2048 \\ layers = 2};
            \node [operator, on chain] (tanh_d) {~$\tanh$~};
            \node [fcn, on chain] (layer_norm_d) {LayerNorm};
            \node [fcn, ml/decoder, on chain] (fcn_d) {FCN \\ (2048, 2)};
            \node [operator, on chain] (softsign_d) {~$\frac{x}{1+|x|}$~};
            \node [output, on chain] (x_hat) {$\hat {\vec x}$};
        }
        \coordinate (lstm_e_north_west) at ($(lstm_e.west)+(0,0.5)$);
        \coordinate (lstm_e_south_west) at ($(lstm_e.west)-(0,0.5)$);

        \node [input, left=of lstm_e_north_west] (x) {$\vec x$};
        \draw (x) -- (lstm_e_north_west);
        \node [input, left=of lstm_e_south_west] (hx) {$\vec h$};
        \draw (hx) -- (lstm_e_south_west);

        \node [cell, inner xsep=1.5mm, fit=(lstm_e) (fcn_e)] (encoder_border) {};
        \node[anchor=south west] at (encoder_border.north west) {Encoder};
        \node [cell, inner xsep=1.5mm, fit=(lstm_d) (softsign_d)] (decoder_border) {};
        \node[anchor=south west] at (decoder_border.north west) {Decoder};
    \end{tikzpicture}
    \caption{RAE architecture using LSTMs for encoding and decoding.
        Using an unbalanced architecture, with the decoder being larger than the encoder, enables higher quality reconstruction which results in faster training and better performance.
        Softsign is applied to the decoder output to ensure that values meet the $s_\mathrm{path} \in [-1,1]$ requirement whilst providing close to linear mappings in this range.
    }
    \label{fig:rae:lstm_ae}
\end{figure*}

\section{Model Architectures}
\label{sect:architecture}

The principal architecture introduced in this work is the \nomdef{Long Short-Term Memory AE with Soft Actor-Critic}{\lstmaesac}. This model leverages a RAE, specifically an LSTM-based AE as previously defined in \autoref{sect:rae}, to handle path history feature extraction. The LSTMAE is coupled with the \nomdef{Soft Actor-Critic}{SAC} algorithm, which has demonstrated robust performance in continuous action spaces with large observation spaces. The \lstmaesac architecture is designed to efficiently process temporal information from the agent's path history, potentially reducing the need for an extremely large network as observed in previous work \citep{ewers_deep_2025}.

\begin{table*}[htb!]
    \centering
    \caption{Definitions of architectures}
    \label{tbl:method:definitions_of_architectures}
    \begin{tabular}{@{}l|lL{3cm}L{2.5cm}|L{2.5cm}@{}}
        \toprule
        Architecture title & RL Algorithm & Path Observation Augmentation & Inner-model Feature Extraction & Observation Space                                \\ \midrule
        \lstmaesac         & SAC          & LSTM-AE                       & None                           & $\left( z_\mathrm{path}, s_\mathrm{PDM} \right)$ \\ \midrule
        \lstmaeppo         & PPO          & LSTM-AE                       & None                           & $\left( z_\mathrm{path}, s_\mathrm{PDM} \right)$ \\
        \lstmppo           & RPPO         & None                          & None                           & $\left( s_\mathrm{pos}, s_\mathrm{PDM}\right)$   \\
        \fssaclstm         & SAC          & Frame Stacking                & LSTM                           & $\left( s_\mathrm{path},s_\mathrm{PDM}\right)$   \\
        \fssacfcn          & SAC          & Frame Stacking                & Fully Connected Network        & $\left( s_\mathrm{path},s_\mathrm{PDM}\right)$   \\
        \fssacconvtwod     & SAC          & Frame Stacking                & 2D Convolution                 & $\left( s_\mathrm{path},s_\mathrm{PDM}\right)$   \\
        \fsppolstm         & PPO          & Frame Stacking                & LSTM                           & $\left( s_\mathrm{path},s_\mathrm{PDM}\right)$   \\
        \fsppofcn          & PPO          & Frame Stacking                & Fully Connected Network        & $\left( s_\mathrm{path},s_\mathrm{PDM}\right)$   \\
        \fsppoconvtwod     & PPO          & Frame Stacking                & 2D Convolution                 & $\left( s_\mathrm{path},s_\mathrm{PDM}\right)$   \\ \bottomrule
    \end{tabular}
\end{table*}

To comprehensively evaluate the efficacy of \lstmaesac, a suite of comparative architectures were developed. These vary in their DRL algorithms, path observation augmentation techniques, and inner-model feature extraction methods. The key variants are defined in \autoref{tbl:method:definitions_of_architectures} and were designed to systematically explore the impact of different components:
\begin{itemize}
    \item Efficacy of path observation augmentation,
    \item Differences in DRL algorithm,
    \item Impact on inner-model feature extraction in lue of path observation augmentation.
\end{itemize}

\begin{figure*}[htbp]
    \centering

    \tikzset{
        block/.style = {draw, fill=white, rectangle, minimum height=3em},
        input/.style = {fill=none, rectangle},
        output/.style= {fill=none, rectangle},
    }
    \subfloat[Policy for the LSTMAE variation. Note that the LSTM encoder module is frozen and its parameters do not get updated during training. \label{fig:method:lstmae_policy}]{
        \begin{tikzpicture}[]
            {[start chain]
                \node [input, on chain] (spath) {$s_\mathrm{pos}$};
                \node [block, ml/encoder, on chain, dashed] (lstm_e) {LSTM AE};
                \node [sum, on chain] (concatenate)  {};
                \node [block, on chain] (policy) {FCN};
                \node [output, on chain] (action) {$a$};
            }
            \node [input, above of=spath, densely dotted] (h)  {$h_{t-1}$};
            \draw [->, ml/weights/arrow] (h) -| (lstm_e);
            \node [input, below of=spath] (spdm)  {$s_\mathrm{PDM}$};
            \draw [->, ml/weights/arrow] (spdm) -| (concatenate);
            \path (concatenate) -- node[above] {$s$}(policy);
            \path (lstm_e) -- node[above] {$z$}(concatenate);
        \end{tikzpicture}
    }
    \hfill
    \subfloat[RPPO policy where $h_{t-1}$ is the LSTM hidden state from the previous time-step.\label{fig:method:rppo_policy}]{
        \begin{tikzpicture}[]
            {[start chain]
                \node [input, on chain] (spath) {$s_\mathrm{pos}$};
                \node [sum, on chain] (concatenate)  {};
                \node [block, on chain] (policy_lstm) {LSTM};
                \node [block, on chain] (policy_fcn) {FCN};
                \node [output, on chain] (action) {$a$};
            }
            \node [input, above of=spath] (h)  {$h_{t-1}$};
            \node [input, below of=spath] (spdm)  {$s_\mathrm{PDM}$};
            \draw [->, ml/weights/arrow] (spdm) -| (concatenate);
            \draw [->, ml/weights/arrow, densely dotted] (h) -| (policy_lstm);
            \path (concatenate) -- node[above] {$s$}(policy_lstm);
        \end{tikzpicture}
    }
    \hfill
    \subfloat[FCN inner-model feature extraction.\label{fig:method:fcn_policy}]{
        \begin{tikzpicture}[]
            {[start chain]
                \node [input, on chain] (spath) {$s_\mathrm{path}$};
                \node [ml/encoder, on chain] (path_fe) {FCN};
                \node [sum, on chain] (concatenate)  {};
                \node [block, on chain] (policy) {FCN};
                \node [output, on chain] (action) {$a$};
            }
            \node [input, below of=spath] (spdm)  {$s_\mathrm{PDM}$};
            \draw [->, ml/weights/arrow] (spdm) -| (concatenate);
            \path (concatenate) -- node[above] {$s$}(policy);
        \end{tikzpicture}
    }
    \vfill
    \subfloat[LSTM inner-model feature extraction.\label{fig:method:lstm_policy}]{
        \begin{tikzpicture}[]
            {[start chain]
                \node [input, on chain] (spath) {$s_\mathrm{path}$};
                \node [ml/encoder, on chain] (path_fe_lstm) {LSTM};
                \node [ml/encoder, on chain] (path_fe_fcn) {FCN};
                \node [sum, on chain] (concatenate)  {};
                \node [block, on chain] (policy) {FCN};
                \node [output, on chain] (action) {$a$};
            }
            \node [input, below of=spath] (spdm)  {$s_\mathrm{PDM}$};
            \node[rectangle, inner sep=0.2cm, draw=none, fit=(path_fe_lstm)] (path_fe_lstm_border) {};
            \draw [->, ml/weights/arrow, densely dotted]
            (path_fe_lstm.east)++(0,0.15)
            -- ($(path_fe_lstm_border.east)+(0,0.15)$)
            -- (path_fe_lstm_border.north east)
            -- node[above] {$h$} (path_fe_lstm_border.north west)
            -- ($(path_fe_lstm_border.west)+(0,0.15)$)
            -- ($(path_fe_lstm.west)+(0,0.15)$);

            \path (concatenate) --node[above] {$s$} (policy);
            \draw [->, ml/weights/arrow] (spdm) -| (concatenate);
        \end{tikzpicture}
    }
    \hfill
    \subfloat[2D CNN inner-model feature extraction.\label{fig:method:cnn_policy}]{
        \begin{tikzpicture}[]
            {[start chain]
                \node [input, on chain] (spath) {$s_\mathrm{path}$};
                \node [ml/encoder, on chain] (path_fe_cnn) {2D CNN};
                \node [ml/encoder, on chain] (path_fe_fcn) {FCN};
                \node [sum, on chain] (concatenate)  {};
                \node [block, on chain] (policy) {FCN};
                \node [output, on chain] (action) {$a$};
            }
            \node [input, below of=spath] (spdm)  {$s_\mathrm{PDM}$};

            \path (concatenate) --node[above] {$s$} (policy);
            \draw [->, ml/weights/arrow] (spdm) -| (concatenate);
        \end{tikzpicture}
    }
    \caption{The five proposed policy architectures for use with either PPO, RPPO, or SAC. Figure \protect\subref{fig:method:rppo_policy} is only used with RPPO.}
    \label{<label>}
\end{figure*}
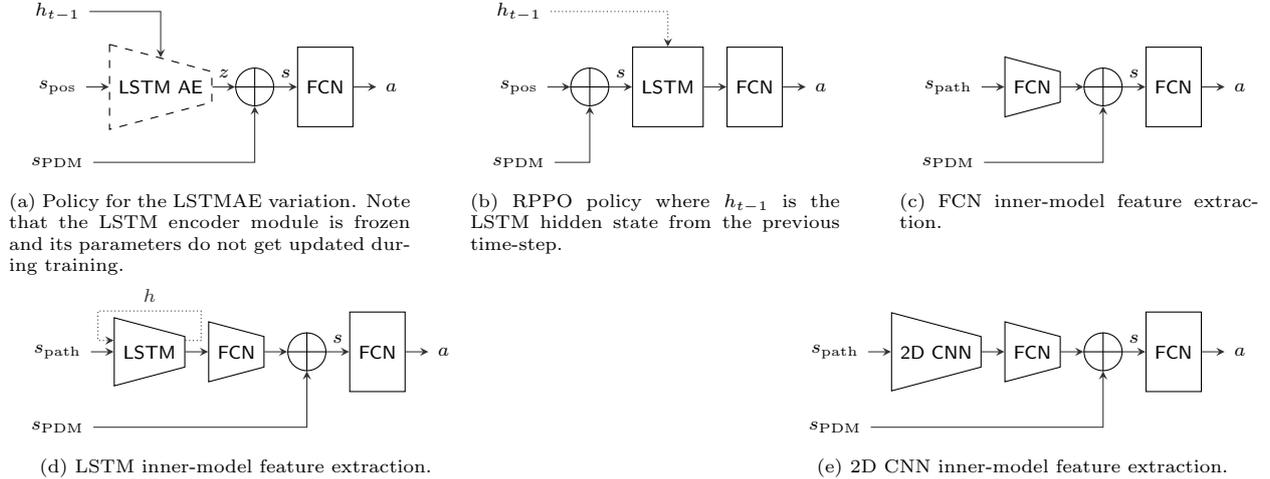

% \begin{figure}[htbp]
%     \centering
%     \subfloat[
%         2D convolution
%         \label{fig:buffered_linestring_with_overlap}
%     ]{
%         \includegraphics[height=6cm]{path_feature_extractor_conv2d.pdf}
%     }
%     \hfill
%     \subfloat[
%         LSTM
%         \label{fig:buffered_linestring_triangulated}
%     ]{
%         \includegraphics[height=6cm]{path_feature_extractor_lstm.pdf}
%     }
%     \hfill
%     \subfloat[
%         Fully cconnected network
%         \label{fig:buffered_linestring_triangulated}
%     ]{
%         \includegraphics[height=6cm]{path_feature_extractor_fcn.pdf}
%     }
%     \caption{Visualizations of concepts related to the buffered polygon representation of the seen area.}
%     \label{fig:buffered_linestring}
% \end{figure}

\section{Practical Implementation Details}
\label{sect:implementation}

\subsection{Cubature}

The integral is calculated using a cubature integration scheme \citep{cools_algorithm_1997} with constrained Delaunay triangulation \citep{chew_constrained_1987} to subdivide $H_t$ into triangles for fast computation. The use of pseudo-continuous over discrete integration has shown to greatly reduce noise in previous work \citep{ewers_enhancing_2024}. Whilst noise can be beneficial to promote exploration, this must be controllable and tunable. Reducing noise in the reward function, where cubature is being used, is critical to maximize learning efficiency else expensive techniques have to be employed \citep{wang_reinforcement_2020}.

\subsubsection{Recurrent Encoder}

\paragraph{Training}

During training the test dataset was unbatched and chunked into $N \sim \mathcal{U}(2, k)$ length sections where $k$ is the length of the longest path in the batch. If $N < k$ then the hidden states would be reused for the next section rather than resetting. This significantly improved the speed of convergence. Automatic mixed precision was used to further increase training times. A no-improvement criterion was used where training would terminate if the amount of epochs since a loss function decrease breaches a patience threshold.

\paragraph{Deployment}

The recurrent encoder, as detailed in Section \ref{sect:rae}, undergoes separate training from the DRL models with its parameters frozen during DRL training. This isolation prevents latent space divergence that could destabilize the learning during online updates. In our implementation, the encoder resides in the observation preprocessing pipeline rather than the policy network itself. This architectural choice minimizes replay buffer memory and compute requirements during training (critical for SAC's experience replay mechanism), though deployment permits alternative configurations.
The two viable deployment strategies are:
\begin{itemize}
    \item Hidden State Propagation: Stores only the encoder's hidden states (including cell state), giving a constant runtime performance per step with fixed memory usage (hidden states).
    \item  Full History Processing: Maintains complete trajectory histories, with the memory requirements and runtime performance growing linearly with the episode length.
\end{itemize}
In this work the former approach - hidden state propagation - is used. The full history processing approach would be required for other techniques such as temporal convolution networks \citep{lea_temporal_2016} or transformers \citep{vaswani_attention_2017}.

\subsection{Further Architecture Details}

The RPPO implementation used in this work is from \citet{raffin_stablebaselines3_2021} which aligns closely with \citet{pleines_generalization_2022}. The 2D convolution kernel inner-model feature extractor for \fsppoconvtwod and \fssacconvtwod is from \citet{mnih_humanlevel_2015}

\section{Results}
\label{sect:results}

\subsection{Experimental setup}
\label{sect:results:experimental_setup}

Each model was trained over \num{1e7} step with 8 workers on a local \texttt{Ubuntu 22.04} machine with a
\texttt{AMD Ryzen 9 5950X} CPU with a \texttt{NVIDIA RTX A6000} GPU and $64\unit{\giga\byte}$ of RAM. Runs terminated when invalid training updates were attempted which only happened during periods of extremely poor performance. At least five runs per architecture were undertaken with randomized starting seeds which aligns with the best practices outlined in work by \citet{agarwal_deep_2022} to ensure robust analysis for DRL results. The set of run configurations was generated and then randomized during training to avoid any possibility of unforeseen interactions.

The RAE training dataset contained \num[drop-zero-decimal]{5e4} unique paths which were generated using \lhcgwconv \citep{lin_uav_2009} with the same parameters as in this work. The recurrent encoder was trained once and then deployed for all following runs.

\subsection{Evaluation Metrics}

This section outlines the metrics used to evaluate the performance of the different architectures.

\subsubsection{Mean Step Reward}

The average reward received by the agent at each time step during an episode. A higher mean step reward indicates that the agent is making more effective decisions that lead to higher rewards at each step.

\subsubsection{Mean Rollout Episode Reward}

The average reward accumulated by the agent over a complete episode during the rollout phase. This metric reflects the overall performance of the learned policy in generating high-quality paths.

\subsubsection{Mean Episode Length}

The average number of time steps taken by the agent before the episode terminates.  A longer mean episode length generally indicates that the agent is able to explore the environment more effectively and find longer, more efficient paths.

\subsubsection{Maximum Probability Efficiency}

The highest achieved probability of reaching the target within a given search area observed during training as defined in \autoref{sect:method:rewards}. This metric directly reflects the search efficacy of the architecture, indicating how well it can find the target within the specified search space.

\subsubsection{Runtime}

The total time or steps taken by the architecture to train. This metric is crucial for practical applications, as it indicates the computational cost of training the architecture.

\subsubsection{Mean Probability Efficiency}

Similar to the maximum probability efficiency, it is the mean probability efficiency achieved during evaluation. This metric provides a comprehensive assessment of the architecture's search performance across multiple runs.

\subsubsection{Number of Parameters}

The total number of learnable parameters in the neural network architecture. This metric provides an indication of the model's complexity and computational requirements.

\subsection{Architecture}
\label{sect:results:architecture}

From \autoref{fig:results:rollout_reward_mean_over_time} it is clear that \lstmaesac, \lstmaeppo, \lstmppo, and \fsppoconvtwod had the best rollout reward performance with very stable learning curves. However, the \fssacconvtwod variant was by far the least stable and consistently crashed during training with illegal update steps. This is further corroborated by \autoref{tbl:results:architecture_performance} with \fsppoconvtwod having one of the highest mean runtime steps at \num{9.82E+06} and \fssacconvtwod having the lowest at \num{1.00E+06}.
\begin{table*}[htbp]
    \centering
    \caption{Aggregated architecture results over multiple metrics gathered at the end of a training run.}
    \label{tbl:results:architecture_performance}
    \sisetup{exponent-thresholds=-4:3,
    }
    \begin{tabular}{l|
            S[table-format=1.3]
            S[table-format=1.3]
            |
            S[table-format=2.3]
            S[table-format=2.3]
            |
            S[table-format=1.3]
            S[table-format=1.3]
            |
            S[table-format=3.3, drop-exponent=true, exponent-mode=fixed, fixed-exponent=3]
            S[table-format=3.3, drop-exponent=true, exponent-mode=fixed, fixed-exponent=3]
            |
            S[table-format=2.3, drop-exponent=true, exponent-mode=fixed, fixed-exponent=6]
            S[table-format=2.3, drop-exponent=true, exponent-mode=fixed, fixed-exponent=6]
        }
        \toprule
        Architecture   & \multicolumn{2}{R{2cm}}{Mean Step Reward} & \multicolumn{2}{R{2.5cm}}{Mean Episode Length} & \multicolumn{2}{R{2.5cm}}{Maximum Probability Efficiency} & \multicolumn{2}{R{2.5cm}}{ Runtime [s/\num[exponent-mode=input, output-exponent-marker=]{e3}]} & \multicolumn{2}{R{2.5cm}}{Runtime [steps/\num[exponent-mode=input, output-exponent-marker=]{e6}]}                                                                            \\

                       & {Mean.}                                   & {Std.}                                         & {Mean.}                                                   & {Std.}                                                                                         & {Mean.}                                                                                           & {Std.}   & {Mean.}            & {Std.}   & {Mean.}            & {Std.}   \\
        \midrule
        \fsppoconvtwod & 3.29E-01                                  & 5.19E-02                                       & 3.69E+01                                                  & 1.19E+01                                                                                       & 1.02E-01                                                                                          & 1.88E-02 & 5.11E+04           & 7.71E+03 & 9.82E+06           & 6.82E+05 \\
        \fsppofcn      & 2.72E-01                                  & 9.37E-02                                       & 2.16E+01                                                  & 1.08E+01                                                                                       & 6.59E-02                                                                                          & 1.24E-02 & 4.58E+04           & 1.54E+04 & 9.36E+06           & 1.67E+06 \\
        \fsppolstm     & 2.73E-01                                  & 1.15E-01                                       & 1.64E+01                                                  & 7.39E+00                                                                                       & 4.37E-02                                                                                          & 8.88E-03 & 9.48E+04           & 2.49E+04 & 9.87E+06           & 5.91E+05 \\
        \fssacconvtwod & 2.26E-01                                  & 1.03E-01                                       & 1.01E+01                                                  & 1.63E+00                                                                                       & 3.63E-02                                                                                          & 2.95E-02 & \bfseries 9.77E+03 & 4.55E+03 & 1.00E+06           & 3.16E+05 \\
        \fssacfcn      & 2.07E-01                                  & 9.77E-02                                       & 2.21E+01                                                  & 1.92E+01                                                                                       & 7.90E-02                                                                                          & 2.50E-02 & 7.99E+04           & 5.27E+04 & 7.23E+06           & 3.72E+06 \\
        \fssaclstm     & 1.36E-01                                  & 1.14E-01                                       & 3.20E+01                                                  & 2.51E+01                                                                                       & 8.43E-02                                                                                          & 1.67E-02 & 2.80E+05           & 1.25E+05 & 7.10E+06           & 2.99E+06 \\
        \lstmaeppo     & 5.20E-01                                  & 6.10E-02                                       & 5.11E+01                                                  & 4.25E+00                                                                                       & 1.76E-01                                                                                          & 6.71E-03 & 1.56E+05           & 4.99E+04 & \bfseries 1.00E+07 & 0.00E+00 \\
        \lstmaesac     & \bfseries 5.37E-01                        & 4.31E-02                                       & \bfseries 5.73E+01                                        & 6.09E+00                                                                                       & \bfseries 1.92E-01                                                                                & 1.16E-02 & 2.55E+05           & 5.51E+04 & 9.75E+06           & 6.10E+05 \\
        \lstmppo       & 4.13E-01                                  & 5.12E-02                                       & 5.54E+01                                                  & 7.40E+00                                                                                       & 1.64E-01                                                                                          & 1.95E-02 & 3.55E+05           & 1.28E+05 & 8.26E+06           & 3.12E+06 \\
        \bottomrule
    \end{tabular}
\end{table*}

\begin{figure}
    \centering
    \includegraphics[width=0.7\linewidth]{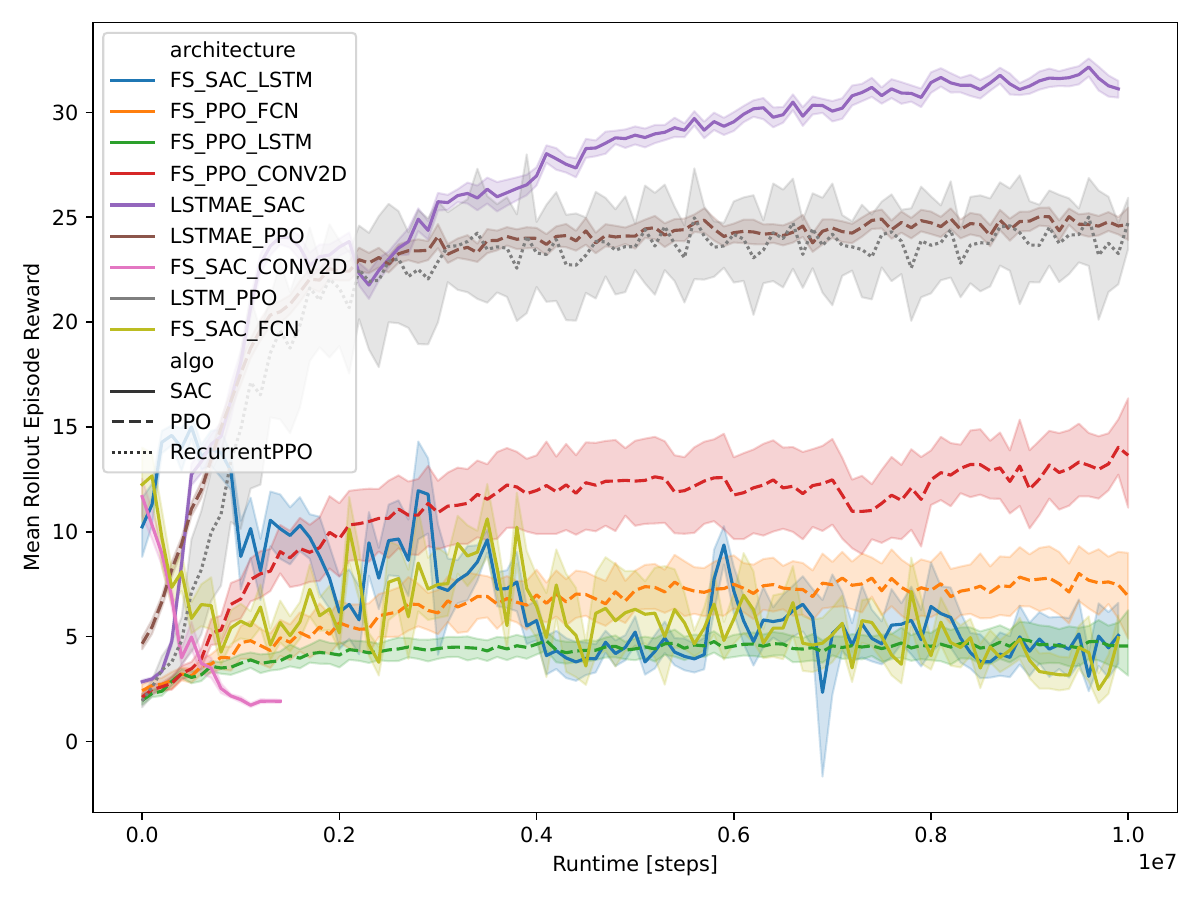}
    \caption{Mean rollout episode reward over global step for all architectures highlighting the training stability provided by the LSTMAE.}
    \label{fig:results:rollout_reward_mean_over_time}
\end{figure}

With the simulation terminating prematurely if the agent steps out-of-bounds,
it is important for the mean episode length to be as close to the maximum simulation length as possible.
The worst performance was again \fssacconvtwod with \num{1.01E+01} showing a complete lack of generalization.
\lstmaesac had the highest mean episode length at \num{5.73E+01} with \lstmaeppo and \lstmaeppo achieving similar results at \num{5.11E+01} and \num{5.54E+01} respectively.
None of the frame-stacking variants were able to breach the \num{4.00E+01} barrier with \fssacconvtwod coming closest at \num{3.69E+01} which aligns with the performance displayed in \autoref{fig:results:rollout_reward_mean_over_time}.

Mean step reward and maximum probability efficiency should strongly correlate from the definition of reward in \autoref{eqn:reward_with_cases}. However, the latter is the key metric as it directly relates to the search efficacy.
Similar to previous metrics, \fssacconvtwod displayed the poorest performance with a score of \num{3.63E-02}. \fsppoconvtwod, again, was the best of the frame-stacking variants with a score of \num{1.02E-01}. \lstmaeppo and \lstmppo also performed inline with previous results with \num{1.76E-01} and \num{1.64E-01} respectively. Whilst \lstmaeppo has a higher mean maximum probability efficiency, it also has a lower standard deviation of \num{6.71E-03} compared to \num{1.95E-02} across more runs. This could indicate the additional stability offered by having a static LSTMAE compared to the internalized LSTM module of \lstmppo from \autoref{fig:method:rppo_policy}. Furthermore, this could hint at the vanishing gradient problem that recurrent networks suffer from.

Ultimately, \lstmaesac showed the best performance across mean step reward (\num{1.70E-02} higher than \lstmaeppo),
mean episode length (\num{1.90E-02} higher than \lstmppo),
and maximum probability efficiency with a value of \num{1.92E-01} (\num{1.60E-02} higher than \lstmaeppo).
Contrary to the runtime results from \autoref{tbl:results:architecture_performance},
\autoref{fig:results:rollout_reward_mean_over_time} shows a training curve that had not reached its maxima whilst starkly outperforming the competing variants.
\subsection{Ablation Study: Reinforcement Algorithm}
\label{sect:results:algorithm}

SAC and PPO have been extensively compared in the literature, however, it is typically only the environment that is being changed. In this comparison the result from the various architectures are grouped by DRL algorithm with a consistent environment and hyperparameters. This will give an indication of the robustness and sensitivity to network changes.

\autoref{fig:results:rollout_reward_mean_over_time} displays the inter-quartile range of the maximum probability efficiency from \autoref{tbl:results:architecture_performance}. This shows that neither PPO nor SAC have a clear advantage for the frame-stacking variant. However, the LSTMAE results clearly show that SAC outperforms PPO here. The aggregated mean for PPO is \num{1.43E-01} compared to \num{1.37E-01} for SAC showing that PPO perhaps has a slight edge. However, applying a p-test gives a p-value of 0.61 which implies that the results are not conclusive.

Isolating the top frame-stacking (\fsppoconvtwod and \fssaclstm) architectures results in a different outcome. SAC has a higher aggregated mean maximum probability efficiency of \num{1.54E-01} whilst PPO only achieves \num{1.43E-01}. A p-test, however, also reveals that the the results are not conclusive with a p-value of 0.48.

\begin{table*}[htbp]
    \centering
    \caption{Aggregated algorithm (PPO, RPPO, SAC) results over multiple metrics gathered at the end of a training run.}
    \label{tbl:results:rl_algorithms}
    \sisetup{exponent-thresholds=-3:3}
    \begin{tabular}{l|
            S[table-format=1.3]
            S[table-format=1.3]
            |
            S[table-format=2.3]
            S[table-format=1.3]
            |
            S[table-format=1.3]
            S[table-format=1.3]
            |
            S[table-format=3.3, drop-exponent=true, exponent-mode=fixed, fixed-exponent=3]
            S[table-format=3.3, drop-exponent=true, exponent-mode=fixed, fixed-exponent=3]
            |
            S[table-format=1.3, drop-exponent=true, exponent-mode=fixed, fixed-exponent=6]
            S[table-format=1.3, drop-exponent=true, exponent-mode=fixed, fixed-exponent=6]
        }
        \toprule
        Algorithm & \multicolumn{2}{R{2.5cm}}{Mean Step Reward} & \multicolumn{2}{R{2.5cm}}{Mean Episode Length} & \multicolumn{2}{R{2.5cm}}{Maximum Probability Efficiency} & \multicolumn{2}{R{2.5cm}}{ Runtime [s/\num[exponent-mode=input, output-exponent-marker=]{e3}]} & \multicolumn{2}{R{2.5cm}}{Runtime [steps/\num[exponent-mode=input, output-exponent-marker=]{e6}]}                                                                            \\
                  & {Mean}                                      & {Std.}                                         & {Mean}                                                    & {Std.}                                                                                         & {Mean}                                                                                            & {Std.}   & {Mean}             & {Std.}   & {Mean}             & {Std.}   \\
        \midrule
        PPO       & \bfseries 3.53E-01                          & 1.34E-01                                       & 3.17E+01                                                  & 1.65E+01                                                                                       & 9.82E-02                                                                                          & 5.40E-02 & 9.08E+04           & 5.38E+04 & \bfseries 9.78E+06 & 9.05E+05 \\
        SAC       & 3.14E-01                                    & 1.90E-01                                       & \bfseries 3.42E+01                                        & 2.36E+01                                                                                       & \bfseries 1.12E-01                                                                                & 6.57E-02 & \bfseries 1.67E+05 & 1.29E+05 & 6.89E+06           & 3.85E+06 \\
        \midrule
        RPPO      & 4.13E-01                                    & 5.12E-02                                       & 5.54E+01                                                  & 7.40E+00                                                                                       & 1.64E-01                                                                                          & 1.95E-02 & 3.55E+05           & 1.28E+05 & 8.26E+06           & 3.12E+06 \\
        \bottomrule
    \end{tabular}
\end{table*}

\begin{figure}[htbp]
    \centering
    \includegraphics[width=0.7\linewidth]{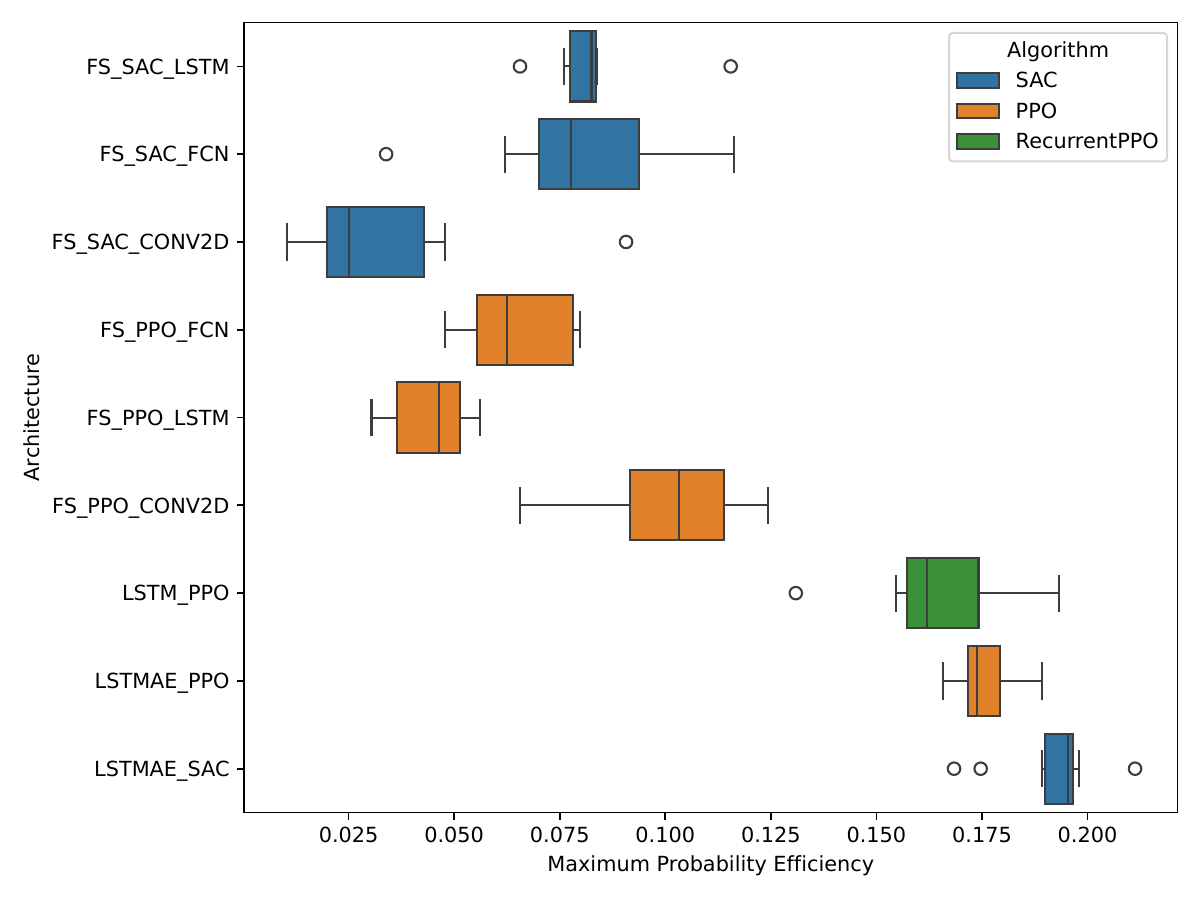}
    \caption{Maximum probability efficiency during training for all architecture variants with colour coded by DRL algorithm.}
    \label{fig:results:maximum_probability_efficiency}
\end{figure}

\subsection{Ablation Study: Path Feature Extraction}

The main contribution of this work is the use of the LSTMAE to handle the feature extraction. It is therefore imperative to evaluate its efficacy compared to the much simpler frame-stacking. \autoref{tbl:results:path_feature_extraction} shows the aggregated performance of the various feature extractors which closely align to those from \autoref{sect:results:architecture}. LSTMAE\_* is the most performant path feature extraction methods across all key performance metrics. All frame-stacking variants have low mean episodes lengths and low mean step reward. Most importantly, the mean maximum probability efficiency for the best frame-stacking variant, FS\_*\_CONV2D, is less than half as performant as LSTMAE\_*.

As expected from \autoref{sect:results:algorithm}, LSTMAE\_* also outperforms LSTM\_*  in all metrics from \autoref{tbl:results:path_feature_extraction}. This further highlights the efficiency of the RAE to capture the relevant information before DRL training to allow the DRL policy to focus on control.

\begin{table*}[htbp]
    \centering
    \caption{Aggregated path feature extraction results over multiple metrics gathered at the end of a training run.}
    \label{tbl:results:path_feature_extraction}
    \sisetup{exponent-thresholds=-3:3,table-format=2.3}
    \begin{tabular}{l|SS|SS|SS}
        \toprule
        Path Feature Extraction & \multicolumn{2}{R{2.5cm}}{Mean Step Reward} & \multicolumn{2}{R{2.5cm}}{Mean Episode Length} & \multicolumn{2}{R{2.5cm}}{Maximum Probability Efficiency}                                            \\
                                & {Mean}                                      & {Std.}                                         & {Mean}                                                    & {Std.}   & {Mean}             & {Std.}   \\
        \midrule
        FS\_*\_CONV2D           & 2.88E-01                                    & 8.96E-02                                       & 2.62E+01                                                  & 1.63E+01 & 7.60E-02           & 4.05E-02 \\
        FS\_*\_FCN              & 2.42E-01                                    & 9.84E-02                                       & 2.18E+01                                                  & 1.48E+01 & 7.20E-02           & 1.99E-02 \\
        FS\_*\_LSTM             & 2.25E-01                                    & 1.30E-01                                       & 2.19E+01                                                  & 1.71E+01 & 5.80E-02           & 2.32E-02 \\
        LSTMAE\_*               & \bfseries 5.54E-01                          & 6.52E-02                                       & \bfseries 5.62E+01                                        & 6.50E+00 & \bfseries 2.01E-01 & 3.04E-02 \\
        LSTM\_*                 & 4.13E-01                                    & 5.12E-02                                       & 5.54E+01                                                  & 7.40E+00 & 1.64E-01           & 1.95E-02 \\
        \bottomrule
    \end{tabular}
\end{table*}

\subsection{Benchmark}
\label{sect:results:benchmark}

Previous work has shown that \fssacfcn with a large enough network can outperform optimisation-based algorithms from the literature in \citep{ewers_deep_2025}. A key limitation of this approach was the large number of parameters and long training times required to achieve these results. The results from \autoref{sect:results:architecture} were achieved using a core policy of dimensions $2 \times 256$ whereas \citep{ewers_deep_2025} used $8 \times 2000$. For a fairer comparison, a $2 \times 2000$ version of \lstmaesac was also trained.

\autoref{tbl:results:long_runs} shows the mean probability efficiency from 2000 generated paths. It is evident that the smaller variants of both \fssacfcn and \lstmaesac performed poorer than their larger counterparts. However, the smaller \lstmaesac is only worse than the large \fssacfcn by a margin of \num{2.40E-02} whilst having 0.27\% the number of learnable parameters. Large \lstmaesac, which has 14.0\% of the amount of learnable parameters of large \fssacfcn, clearly outperforms all others variants. A performance difference of \num{4.00E+02} might hint at a insignificant result, however the sample sizes were above 2000 resulting in a p-value of 0.0151 which is above the threshold of 0.05 showing a meaningful difference in distributions. Large \lstmaesac therefore outperforms large \fssacfcn whilst being only 14\% of the size and only needing to be trained for 23\% of the time (90 days for \cite{ewers_deep_2025}) and 21 days for large \lstmaesac before it reached the no-improvement termination criterion).

\begin{table*}[htbp]
    \centering
    \caption{Mean probability efficiency from over 2000 generated paths by the respective architectures.}
    \label{tbl:results:long_runs}
    \sisetup{exponent-thresholds=-3:3}
    \begin{tabular}{@{}lSSrS@{}}
        \toprule
        Architecture                      & {Mean}             & {Std. }  & Core Policy       & {\# Parameters} \\
        \midrule
        \lstmaesac                        & \bfseries 2.00E-01 & 5.70E-02 & $2 \times 2000 $  & 2.17E+07        \\
        \lstmaesac                        & 1.72E-01           & 6.17E-02 & $2 \times 256$    & 4.25E+05        \\
        \midrule
        \fssacfcn \citep{ewers_deep_2025} & 1.96E-01           & 4.71E-02 & $ 8 \times 2000 $ & 1.55E+08        \\
        \fssacfcn                         & 9.03E-03           & 1.91E-02 & $2 \times 256$    & 4.36E+05        \\
        \midrule
        \lhcgwconv \citep{lin_uav_2009}   & 1.21E-01           & 8.02E-02 & n/a               & n/a             \\
        \bottomrule
    \end{tabular}
\end{table*}

\section{Discussion}
\label{sect:discussion}

These results demonstrate the effectiveness of the proposed \lstmaesac architecture for search planning. The LSTMAE effectively captures temporal dependencies within the path, leading to improved performance and stability compared to simpler frame-stacking methods. The RAE training harness maximizes the information throughput in the latent representation which is akin to lossy compression. This prevents the onus from being on the DRL algorithm to perform the same function. Furthermore, the recurrent network allows maximum throughput regardless of path length - from 2 steps to $N$ - since every neuron within the policy is trainable regardless of state. The training stability seen in \autoref{fig:results:rollout_reward_mean_over_time} underlines this with both \lstmaeppo and \lstmaesac having stable learning curves.

While further investigation is needed to definitively determine the optimal DRL algorithm over all architectures, the results show that SAC has an edge over PPO when used with LSTMAE. This aligns with results from \citet{mock_comparison_2023} which suggested that SAC is better for larger dimensional observation spaces.

Isolating the path feature extraction from the DRL algorithm in \autoref{sect:results:algorithm} suggested that the frame-stacking with LSTM was the poorest performer with LSTMAE being the best. This furthers provides evidence to the theory that the RAE is superior to standalone DRL. On the other hand, \lstmppo is close in performance to \lstmaeppo which suggests that simple frame-stacking is the main issues.

The significant performance improvement of the larger \lstmaesac over the larger \fssacfcn in \autoref{sect:results:benchmark} highlights the true efficiency and potential of the proposed approach. Similarly, the minor performance difference between large \fssacfcn and small \lstmaesac shows the power of the highly specialized network architecture proposed in this work. This suggests that the \lstmaesac in general can achieve high performance with fewer parameters and shorter training times, making it a promising approach for real-world applications where computational budgets can be limited.

\section{Conclusion}
\label{sect:conclusion}

In this study, we have presented a novel approach to enhancing WiSAR missions through the integration of an LSTM-based RAE with DRL. The challenges associated with traditional search planning methods, particularly in large and complex environments, necessitate solutions that can optimize both efficiency and effectiveness in locating missing persons.

Our proposed framework addresses key limitations of existing methodologies by decoupling feature extraction from the policy training phase, thereby significantly reducing the number of learnable parameters and improving training speeds. By employing a RAE architecture, we have demonstrated that it is possible to maintain high performance while also ensuring the model is lightweight enough for deployment on resource-constrained devices. The empirical results indicate that our approach enhances the probability of detection and thus accelerates the overall search process, which is critical in time-sensitive rescue scenarios.

Furthermore, this work contributes to the growing body of literature on DRL applications in search planning by providing a comprehensive evaluation of various architectures, including comparisons between PPO and SAC algorithms in large observation domains. SAC outperformed PPO for the proposed architecture whereas PPO and SAC were similar for the rest. However, \fsppoconvtwod was the best non-recurrent architecture whilst \fssacconvtwod was the worst showing that this result is application specific. These findings underscore that DRL is not a golden bullet and the importance of careful model engineering when tackling difficult problems.

\bibliography{references}

\appendix

\section{Parameters}

\begin{table}[h!]
    \centering
    \caption{Simulation parameters used for this work}
    \label{tbl:sim_parameters}
    \begin{tabular}{@{}lSl@{}}
        \toprule
        Parameter                          & {Value}         & Units         \\
        \midrule
        $\ngauss$                          & 4               &               \\
        $\vec \sigma_i$                    & {diag(500,500)} &               \\
        $x_\mathrm{min}$, $y_\mathrm{max}$ & 0               & $\unit\meter$ \\
        $x_\mathrm{min}$, $y_\mathrm{max}$ & 150             & $\unit\meter$ \\
        $\lambda$                          & 8               & $\unit\meter$ \\
        $\rbuffer$                         & 2.5             & $\unit\meter$ \\
        $N_\mathrm{waypoint}$              & 64              &               \\
        $\epsilon$                         & 0.1             &               \\
        $w_{oob}$                          & 1.0             &               \\
        $w_r$                              & 0.5             &               \\
        $w_0$                              & 0.5             &               \\
        \bottomrule
    \end{tabular}
\end{table}

\begin{table}[h!]
    \centering
    \sisetup{table-alignment-mode=none}
    \caption{RAE parameters used for this work}
    \label{tbl:rae_parameters}
    \begin{tabular}{@{}lS@{}}
        \toprule
        Parameter                                 & {Value} \\
        \midrule
        Learning rate                             & 0.0001  \\
        Gradient norm clipping                    & 0.5     \\
        L1 regularization coefficient ($\lambda$) & 0.0001  \\
        Patience                                  & 40      \\
        Batch size                                & 8       \\
        \# Epochs                                 & 5e3     \\
        \# Steps                                  & 5e4     \\
        Optimizer                                 & {adam}  \\
        \bottomrule
    \end{tabular}
\end{table}

\begin{table}[h!]
    \centering
    \newcommand{\rotateheader}[1]{\rotatebox{60}{#1}}
    \sisetup{table-alignment-mode=none}
    \caption{Reinforcement learning hyperparameters for the various architectures in this work. Default values from \cite{raffin_stablebaselines3_2021} \texttt{v2.1.0} were used if not listed in the table.}
    \label{tbl:hyperparameters}
    \begin{subtable}[t]{\linewidth}
        \caption{SAC}
        \label{tbl:hyperparameters:sac}
        \begin{tabular}{@{}l|SSSSS@{}}
            \toprule
                            & \multicolumn{5}{c@{}}{Architecture}                                                                                                                 \\
            \cmidrule(l){2-6}
            Hyperparam.     & \rotateheader{\fssacconvtwod}       & \rotateheader{\fssacfcn} & \rotateheader{\fssaclstm} & \rotateheader{\lstmaesac} & \rotateheader{ \lstmaesac} \\
            \midrule
            Net width       & 2.56e+02                            & 2.56e2                   & 25.6e1                    & 25.6e1                    & \bfseries 204.8e1          \\
            $\tau$          & 1.9736e-1                           & 1.9736E-01               & 1.9736E-01                & 1.9028E-01                & 1.9028E-01                 \\
            \# envs         & 8.0000E+00                          & 8.0000E+00               & 8.0000E+00                & 8.0000E+00                & 8.0000E+00                 \\
            Training freq.  & 1.0000E+00                          & 1.0000E+00               & 1.0000E+00                & 1.0000E+01                & 1.0000E+01                 \\
            Batch size      & 1.0240E+03                          & 1.0240E+03               & 1.0240E+03                & 5.1200E+02                & 5.1200E+02                 \\
            Buffer size     & 5.0000E+06                          & 5.0000E+06               & 5.0000E+06                & 5.0000E+05                & 5.0000E+05                 \\
            Gradient Steps  & 2.0000E+00                          & 2.0000E+00               & 2.0000E+00                & 1.0000E+02                & 1.0000E+02                 \\
            Learning rate   & 1.4363E-04                          & 1.4363E-04               & 1.4363E-04                & 5.9174E-06                & 5.9174E-06                 \\
            Learning starts & 8.0920E+03                          & 8.0920E+03               & 8.0920E+03                & 6.5470E+03                & 6.5470E+03                 \\
            Target entropy  & -1.0000E+00                         & -1.0000E+00              & -1.0000E+00               & -1.0000E+00               & -1.0000E+00                \\
            SDE freq.       & 5.0000E+00                          & 5.0000E+00               & 5.0000E+00                & 4.0000E+00                & 4.0000E+00                 \\
            Optimizer       & {adam}                              & {adam}                   & {adam}                    & {adam}                    & {adam}                     \\
            \bottomrule
        \end{tabular}
    \end{subtable}
    \vfill
    \begin{subtable}[t]{0.32\linewidth}
        \caption{RPPO}
        \label{tbl:hyperparameters:rppo}
        \begin{tabular}{@{}l|S@{}}
            \toprule
                          & \multicolumn{1}{c@{}}{Architecture} \\
            \cmidrule(l){2-2}
            Hyperparam.   & \rotateheader{\lstmppo}             \\
            \midrule
            \# steps      & 1.0247E+04                          \\
            \# envs       & 8.0000E+00                          \\
            Entropy coef. & 2.5000E-07                          \\
            $V_f$ coef.   & 2.3633E-03                          \\
            Batch size    & 1.2800E+02                          \\
            Learning rate & 8.5806E-05                          \\
            SDE freq.     & 2.0000E+00                          \\
            \# epochs     & 3.7000E+01                          \\
            Clip range    & 1.4871E-01                          \\
            GAE $\lambda$ & 9.7492E-01                          \\
            Max. grad.    & 9.8960E-01                          \\
            Optimizer     & {adam}                              \\
            \bottomrule
        \end{tabular}
    \end{subtable}
    \begin{subtable}[t]{0.62\linewidth}
        \caption{PPO}
        \label{tbl:hyperparameters:ppo}
        \begin{tabular}{@{}l|SSSS@{}}
            \toprule
                          & \multicolumn{4}{c@{}}{Architecture}                                                                                     \\
            \cmidrule(l){2-5}
            Hyperparam.   & \rotateheader{\fsppoconvtwod}       & \rotateheader{\fsppofcn} & \rotateheader{\fsppolstm} & \rotateheader{ \lstmaeppo} \\
            \midrule
            \# steps      & 6.126E+03                           & 5.126E+03                & 6.126E+03                 & 3.104E+03                  \\
            \# envs       & 8.000E+00                           & 7.000E+00                & 8.000E+00                 & 8.000E+00                  \\
            Entropy coef. & 2.070E-07                           & 1.070E-07                & 2.070E-07                 & 1.550E-02                  \\
            $V_f$ coef.   & 1.356E-02                           & 0.356E-02                & 1.356E-02                 & 3.287E-01                  \\
            Batch size    & 2.560E+02                           & 1.560E+02                & 2.560E+02                 & 6.400E+01                  \\
            Learning rate & 3.108E-04                           & 2.108E-04                & 3.108E-04                 & 4.360E-04                  \\
            SDE freq.     & 3.000E+00                           & 2.000E+00                & 3.000E+00                 & 5.000E+00                  \\
            \# epochs     & 4.000E+01                           & 3.000E+01                & 4.000E+01                 & 5.800E+01                  \\
            Clip range    & 1.482E-01                           & 0.482E-01                & 1.482E-01                 & 8.208E-02                  \\
            GAE $\lambda$ & 9.009E-01                           & 8.009E-01                & 9.009E-01                 & 9.285E-01                  \\
            Max. grad.    & 1.953E-01                           & 0.953E-01                & 1.953E-01                 & 6.142E-01                  \\
            Optimizer     & {adam}                              & {adam}                   & {adam}                    & {adam}                     \\
            \bottomrule
        \end{tabular}
    \end{subtable}
    \vfill

\end{table}

\end{document}